\newcommand{\redqueen}{{\textsc{Red\-Queen}}\xspace}
\newcommand{\xhdr}[1]{\vspace{0.1mm}\noindent{{\bf #1. }}}
\newcommand\blfootnote[1]{%
  \begingroup
  \renewcommand\thefootnote{}\footnote{#1}%
  \addtocounter{footnote}{-1}%
  \endgroup
}
\title{\redqueen: An Online Algorithm for \\ Smart Broadcasting in Social Networks}
\author[1]{Ali Zarezade$^{*}$}
\author[2]{Utkarsh Upadhyay$^{*}$}
\author[1]{Hamid R. Rabiee}
\author[2]{Manuel Gomez-Rodriguez}
\affil[1]{Sharif University, zarezade@ce.sharif.edu, rabiee@sharif.edu}
\affil[2]{Max Planck Institute for Software Systems, utkarshu@mpi-sws.org, manuelgr@mpi-sws.org}
\date{}
\begin{document}


\maketitle

\begin{abstract}
%
Users in social networks whose posts stay at the top of their followers'{} feeds the longest time are more likely to
be \emph{noticed}.
Can we design an online algorithm to help them decide when to post to stay at the top?
In this paper, we address this question as a novel optimal control problem for jump stochastic differential equations.
For a wide variety of feed dynamics, we show that the optimal broad\-cas\-ting intensity for any user is surprisingly
simple -- it is given by the position of her most recent post on each of her follower'{}s feeds.
As a consequence, we are able to develop a simple and highly efficient online algorithm, \redqueen, to sample the
optimal times for the user to post.
Experiments on both synthetic and real data gathered from Twitter show that our algorithm is able to consistently
make a user'{}s posts more visible over time, is robust to volume changes on her followers'{} feeds, and significantly
outperforms the state of the art.

\end{abstract}

\blfootnote{$^{*}$\scriptsize Authors contributed equally. This work was done during Ali Zarezade'{}s internship at Max Planck
Institute for Software Systems.}

\section{Introduction}
\label{sec:introduction}
Whenever a user in an online social network decides to share a new story with her followers, she is often competing for \emph{attention} with dozens, if not hundreds, of
stories \emph{simultaneously} shared by other users that their followers follow~\cite{backstrom2011center,gomez14icwsm}.
In this context, recent empirical studies have shown that stories at the top of their followers'{} feed are more likely to be \emph{noticed} and consequently liked or
shared~\cite{hodas2012visibility,kang2015vip,lerman2014leveraging}.
Can we find an algorithm that helps a user decide when to post to increase her chances to stay at the top?

The ``when-to-post'' problem was first studied by Spasojevic et al.~\cite{spasojevic2015post}, who performed a large empirical study on the best times to post in Twitter and Facebook, measuring attention a user elicits by means of the number of responses to her posts. Moreover, they designed several heuristics to pinpoint at the times that elicit the greatest attention 
in a training set and showed that these times also lead to more responses in a held-out set.
Since then, algorithmic approaches to the ``when-to-post'' problem with provable guarantees have been largely lacking. Only very recently, Karimi et al.~\cite{karimi2016smart} introduced a convex optimization framework to find optimal broadcasting strategies, measuring attention a user elicits as the time that at least one of her posts is among the 
$k$ most recent stories received in her followers'{} feed.
However, their algorithm requires expensive data pre-processing, it does not adapt to changes in the users'{} feeds dynamics, and in practice, it is less effective than our proposed algorithm,
as shown in Section~\ref{sec:experiments}.

In this paper, we design a novel online algorithm for the when-to-post problem, where we measure \emph{visibility} of a broadcaster as the position of her most recent post
on her followers'{} feeds over time.
A desirable property of this visibility measure is that it can be easily extracted from real data without actual interventions --- given any particular broadcasting strategy for a user, one
can always measure its visibility using a separate held-out set of the user'{}s followers'{} feeds~\cite{karimi2016smart}.
In contrast, measures based on users'{} reactions (\eg, number of likes, shares and replies) are difficult to estimate from real data, due to the
presence of other confounding factors such as users'{} influence, content and wording~\cite{cheng2014can, chenhao14, lakkaraju2013s}.

More precisely, we represent users'{} posts and feeds using the framework of temporal point processes, which characterizes the continuous time interval between posts
using conditional intensity functions~\cite{AalBorGje08}. Under this representation, finding the optimal broadcasting (or posting) strategy for a user reduces to finding its
associated conditional broadcasting intensity~\cite{karimi2016smart}.
Then, for a large family of intensity functions, which includes Hawkes~\cite{hawkes1971spectra} and Poisson~\cite{Kingman1992} as particular instances, we find ``when-to-post''
by solving a novel optimal control problem for a system of jump stochastic differential equations (SDEs)~\cite{hanson2007}. Our problem formulation differs from previous literature
in two key technical aspects, which are of independent interest:
\begin{itemize}[noitemsep,nolistsep,leftmargin=1cm]
\item[I.] The control signal is a conditional (broadcasting) intensity, which is used to sample stochastic events (\ie, stories to post). As a consequence, the problem formulation requires
another layer of stochasticity. In previous work, the control signal is a time-varying real vector.
\item[II.] The (broadcasting) intensities are stochastic Markov processes and thus the dynamics are doubly stochastic.
This requires us to redefine the cost-to-go to incorporate the instantaneous value of these intensities as  additional arguments. Previous work has typically considered constant intensities
and only very recently time-varying deterministic intensities~\cite{wang2016steering}.
\end{itemize}
These technical aspects have implications beyond the smart broadcasting problem since they enable us to establish a previously unexplored
connection between optimal control of jump SDEs and double stochastic temporal point processes (\eg, Hawkes processes), which have been
increasingly used to model social activity~\cite{shaping14nips,coevolve15nips,zhao2015seismic}.

Moreover, we find that the solution to the above optimal control problem is surprisingly simple: the optimal broadcasting intensity for a user is given by the position of her most
recent post on each of her follower'{}s feeds.
This solution allows for a simple and highly efficient online procedure to sample the optimal times for a user to broadcast, which can be implemented in a few
lines of code and does not require fitting a model for the feeds'{} intensities.
Finally, we performed experiments on both synthetic and real data gathered from Twitter and show that our algorithm is able to consistently make a user'{}s posts stay
at the top of her followers'{} feeds, is robust to changes on the dynamics (or volume) of her followers'{} feeds, and significantly outperforms the state of the art~\cite{karimi2016smart}.

\xhdr{Further related work} In addition to the paucity of work on the when-to-post problem~\cite{karimi2016smart,spasojevic2015post}, discussed previously, our work
also relates to:
(i) empirical studies on attention and information overload in social and information networks~\cite{backstrom2011center,gomez14icwsm,hodas2012visibility,miritello2013limited},
which investigate whether there is a limit on the amount of ties (\eg, friends, followees or phone contacts) people can maintain, how people distribute attention across them, and how attention influences the propagation
of information;
and (ii) the influence maximization problem~\cite{CheWanWan2010,du13nips,shaping14nips,KemKleTar03,RicDom02}, which aims to find a set of nodes in a social network whose initial adoption of a certain idea or product can trigger the
largest expected number of follow-ups.
In contrast, we focus on optimizing a social media user'{}s broadcasting strategy
to capture the greatest attention from the followers.

\section{Preliminaries}
\label{sec:background}
We first revisit the framework of temporal point processes~\cite{AalBorGje08} and then use it to represent broadcasters and feeds in social and information
networks.

\xhdr{Temporal point processes}
A temporal point process is a stochastic process whose realization consists of a sequence of discrete events localized in time, $\Hcal = \{t_i\in \RR^{+} \,\vert\, i\in \mathbb{N}^+,\, t_i<t_{i+1} \}$.
In recent years, they have been used to represent many different types of event data produced in online social networks and the web, such as the times of
tweets~\cite{shaping14nips}, retweets~\cite{zhao2015seismic}, or links~\cite{coevolve15nips}.

A temporal point process can also be represented as a counting process $N(t)$, which is the number of events up to time $t$. Moreover, given $\Hcal(t)=\{ t_i \in \Hcal \,\vert\, t_i < t \}$, the history of event times  up to but not including time $t$, we can characterize the counting process using the conditional intensity function $\lambda^{*}(t)$, which is the conditional
probability of observing an event in an infinitesimal window $[t, t + dt)$ given the history $\Hcal(t)$, \ie,
\begin{equation}
\lambda^{*}(t)dt = \PP\cbr{\text{event in $[t, t+dt) | \Hcal(t)$}} = \EE[dN(t)|\Hcal(t)], \nonumber
\end{equation}
where $dN(t) \in \{0, 1\}$ 
and the sign $^{*}$ means that the intensity may depend on the history $\Hcal(t)$.

The functional form for the intensity is often chosen to capture the phenomena of interest.
For example, in the context of modeling social activity, retweets have been modeled using Hawkes processes~\cite{shaping14nips,zhao2015seismic} and daily and weekly variations
on the volume of posted tweets have been captured using Poisson processes~\cite{navaroli2015modeling,karimi2016smart}.
In this work, we consider the following general functional form, which includes Hawkes and Poisson as particular instances:
\begin{equation} \label{eq:intensity}
\lambda^{*}(t) = \lambda_0(t) + \alpha \int_0^t g(t-s) dN(s),
\end{equation}
where $\lambda_0(t) \geq 0$ is a time-varying function, which models the publication of messages by users on their own initiative, the second
term, with $\alpha \geq 0$, models the publication of additional messages (\eg, replies, shares) by the users due to the influence that previous messages
(their own as well as the ones posted by others) have on their intensity, and $g(t)$ denotes an exponential triggering kernel $e^{-w t} \II(t \geq 0)$. 
The second term makes the intensity dependent on history and a stochastic process by itself.
Finally, the following alternative representation will be useful to design our stochastic optimal control algorithm for smart broadcasting (proven in the Appendix):
\begin{proposition}\label{thm:jsde-intensity}
	Let $N(t)$ be a counting process with an associated intensity $\lambda^{*}(t)$ given by Eq.~\ref{eq:intensity}. Then the tuple $(N(t), \lambda^{*}(t))$ is a doubly stochastic Markov process, whose dynamics can be defined by the following jump SDE:
	\begin{align}\label{eq:jsde-intensity}
		d\lambda^{*}(t) = \left[\lambda_0'(t)+w \lambda_0(t) -w \lambda^{*}(t)\right]dt + \alpha dN(t),
	\end{align}
	with initial condition $\lambda^{*}(0) = \lambda_0(0)$.
\end{proposition}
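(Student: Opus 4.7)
The plan is to isolate the history-dependent part of the intensity, differentiate it directly using the special structure of the exponential triggering kernel, and then verify that the resulting dynamics has the Markov form claimed.

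First I would define the auxiliary process $I(t) := \lambda^{*}(t) - \lambda_0(t) = \alpha \int_0^t e^{-w(t-s)}\, dN(s)$, which carries all the history dependence of the intensity. Factoring out the time argument, I rewrite $I(t) = \alpha\, e^{-wt} \int_0^t e^{ws}\, dN(s)$, expressing it as a product of a smooth function of $t$ and a pure jump process in $t$. The key feature of the exponential kernel is that differentiating in $t$ only touches the prefactor $e^{-wt}$, which reproduces $-w I(t)$, while the jump part contributes the instantaneous mass $\alpha\, dN(t)$ (after the $e^{-wt}$ and $e^{ws}$ factors cancel at $s=t$). This is the step where one has to be careful: for a non-exponential kernel one would pick up an extra convolution term, and it is exactly this memorylessness that makes the pair $(N,\lambda^{*})$ Markov. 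Combining these two contributions gives
\begin{equation*}
dI(t) \;=\; -w\, I(t)\, dt \;+\; \alpha\, dN(t).
\end{equation*}

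Next I substitute $I(t) = \lambda^{*}(t) - \lambda_0(t)$ back and add the deterministic increment $\lambda_0'(t)\, dt$ coming from $d\lambda_0(t)$. This immediately yields
\begin{equation*}
d\lambda^{*}(t) \;=\; \lambda_0'(t)\, dt \;-\; w\bigl(\lambda^{*}(t) - \lambda_0(t)\bigr)\, dt \;+\; \alpha\, dN(t) \;=\; \bigl[\lambda_0'(t) + w\lambda_0(t) - w\lambda^{*}(t)\bigr]\, dt \;+\; \alpha\, dN(t),
\end{equation*}
which is exactly Eq.~\ref{eq:jsde-intensity}. The initial condition $\lambda^{*}(0) = \lambda_0(0)$ is read off directly from the definition of $\lambda^{*}$ since $I(0) = 0$.

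Finally, for the doubly stochastic Markov property, I would observe that the derived SDE describes the infinitesimal evolution of $\lambda^{*}(t)$ using only the current value $\lambda^{*}(t)$, the (deterministic) time-dependent inputs $\lambda_0(t)$ and $\lambda_0'(t)$, and the counting increment $dN(t)$, whose conditional law given $\mathcal{H}(t)$ depends on the past only through $\lambda^{*}(t)$. Hence the joint process $(N(t), \lambda^{*}(t))$ is a (time-inhomogeneous) Markov process whose transitions are governed by a doubly stochastic mechanism: the intensity $\lambda^{*}(t)$ is itself a stochastic process driven by $N$, and conditionally on the path of $\lambda^{*}$, $N$ is a point process with that intensity. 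The main technical obstacle is really the first differentiation step, and it collapses to a one-line calculation because of the exponential kernel; the rest is bookkeeping and an appeal to the defining properties of the intensity.
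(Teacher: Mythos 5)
Your proof is correct and follows essentially the same route as the paper: both arguments reduce to differentiating the convolution term $\alpha\int_0^t e^{-w(t-s)}\,dN(s)$ and exploiting that the exponential kernel satisfies $g'=-wg$, so that the integral obeys a first-order linear jump SDE. The only cosmetic difference is that you factor out $e^{-wt}$ and apply the product rule, whereas the paper Taylor-expands $g(t+dt-s)$ inside the integral; both yield the same two contributions $-w\bigl(\lambda^{*}(t)-\lambda_0(t)\bigr)\,dt$ and $\alpha\,dN(t)$.
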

In the remainder of the paper, to simplify the notation, we drop the sign $^{*}$ from the intensities.

\xhdr{Representation of broadcasters and feeds}
Given a directed network $\mathcal{G}=(\mathcal{V},\mathcal{E})$ with $|\mathcal{V}|=n$ users, we assume any user can be a broadcaster, a follower or both, each broadcaster can be followed by
multiple followers, and each follower can follow multiple broadcasters.
Then, we represent the broadcasting times of the users as a collection of counting processes denoted by a vector $\bm{N}(t)$, in which the $i$-th dimension, $N_{i}(t)$, is the number
of messages or \emph{stories} broadcasted by user $i$ up to time $t$. Here, we denote the history of times of the stories broadcasted by user $i$ by time $t$ as $\Hcal_i(t)$, the entire history of
times as $\Hcal(t) = \cup_{i \in \Vcal} \Hcal_i(t)$, and characterize these counting processes using their corresponding intensities, \ie, $\mathbb{E}[d\bm{N}(t) | \Hcal(t)] = \bm{\mu}(t) \, dt$.

Given the adjacency matrix $A\in \{0,1\}^{n\times n}$, where $A_{ij}=1$ indicates that user $j$ follows user $i$, we can represent the times of the stories users receive in
their feeds from the broadcasters they follow as a sum of counting processes, $A^T \bm{N}(t)$, and calculate the corresponding conditional intensities as $\gammab(t) = A^T \bm{\mu}(t)$.
Here, we denote the history of times of the stories received by user $j$ by time $t$ as $\Fcal_j(t) := \cup_{i \in \Ncal(j)} \Hcal_{i}(t)$, where $\Ncal(j)$ is the set of users that $j$
follows.

Finally, from the perspective of a broadcaster $i$, it is useful to define the counting processes $\bm{M}_{{\scriptscriptstyle\setminus} i}(t) =  A^T \bm{N}(t) - A_i N_i(t)$, in which the $j$-th
dimension, $M_{j {\scriptscriptstyle\setminus} i}(t)$, represents the times of the stories user $j$ receives due to other broadcasters she follows, and $A_i$ is the $i$-th row of the
adjacency matrix $A$.
Moreover, for each of these counting processes, the conditional intensity is given by $\gamma_{j {\scriptscriptstyle\setminus} i}(t) = \gamma_{j}(t) - \mu_i(t)$ and the history
is given by $\Fcal_{j {\scriptscriptstyle\setminus} i}(t) := \Fcal_j(t) \backslash \Hcal_{i}(t)$.
%
%

\section{Problem Formulation}
\label{sec:formulation}
In this section, we first define our visibility measure, $r(t)$, then derive a jump stochastic differential equation that links our measure to the counting processes
associated to a broadcaster and her followers, and conclude with a statement of the when-to-post problem for our visibility measure.
\begin{figure}
\centering
\includegraphics[width=0.7\textwidth]{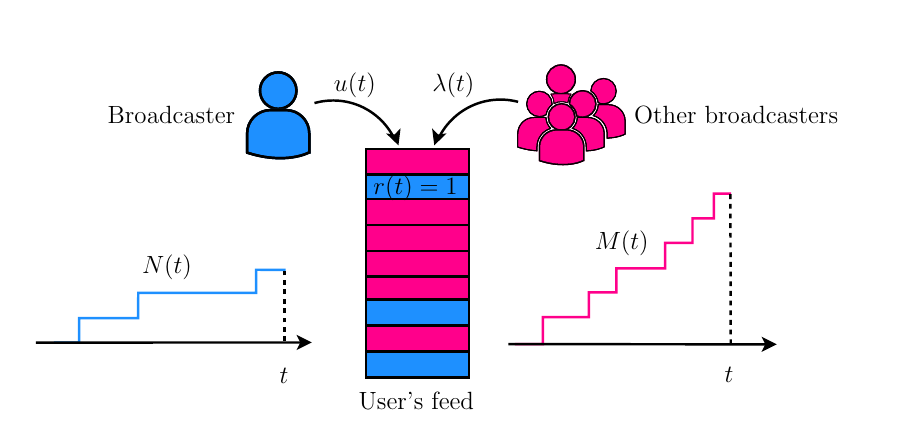}
\caption{The dynamics of visibility. A broadcaster $i$ posts $N_i(t) = N(t)$ messages with intensity $\mu_{i}(t) = u(t)$. Her messages accumulate in her follower $j$'{}s 
feed, competing for attention with $M_{j\backslash i}(t) = M(t)$ other messages, posted by other broadcasters $j$ follows with intensity $\gamma_{j \backslash i}(t) = \lambda(t)$. 
The visibility function $r_{ij}(t) = r(t)$ is the position or rank of the most recent story posted by broadcaster $i$ in the follower $j$'{}s feed by time t.}
\label{fig:model}
\vspace{-2mm}
\end{figure}

\xhdr{Definition of visibility}
Given a broadcaster $i$ and one of her followers $j$, we define the visibility function $r_{ij}(t)$ as the position or \emph{rank} of the most recent story posted
by $i$ in $j$'{}s feed by time $t$, which clearly depends on the feed ranking mechanism in the corresponding social network.
Here, for simplicity, we assume each user'{}s feed ranks stories in inverse chro\-no\-lo\-gi\-cal order\footnote{\scriptsize At the time of writing, Twitter
and Weibo rank stories in inverse chronological order by default and Facebook allows choosing such an ordering.}.
However, our framework can be easily extended to any feed ranking mechanisms, as long as its rank dynamics can be expressed as a
jump SDE\footnote{\scriptsize This would require either having access to the corresponding feed ranking mechanism or reverse engineering it, which is out of the scope of this work.}.

Under the inverse chronological ordering assumption, position is simply the number of stories that others broadcasters posted in $j$'s feed from the time of the most recent story posted
by $i$ until $t$.
Then, when a new story arrives to a user'{}s feed, it appears at the top of the feed and the other stories are shifted down by one.
If we identify the time of the most recent message posted by $i$ by time $t$ as $\tau_{i}(t) = \max\{t_k \in \Hcal_i(t)\}$, then the visibility is formally defined as:
\begin{align}\label{equ:rank}
	r_{ij}(t) = M_{j {\scriptscriptstyle\setminus} i}(t) - M_{j {\scriptscriptstyle\setminus} i}(\tau_{i}(t)),
\end{align}
Note that, if the last story posted by $i$ is at the top of $j$'{}s feed at time $t$,
then $r_{ij}(t) = 0$.

\xhdr{Dynamics of visibility}
%
Given a broadcaster $i$ with broadcasting counting process $N_i(t)$ and one of her followers $j$ with feed counting process due to other broadcasters $M_{j {\scriptscriptstyle\setminus} i}(t)$, the rank of $i$ in $j$'s feed $r_{ij}(t)$ satisfies the following equation:
\begin{equation}
	r_{ij}(t+dt) =
	\underset{\text{\scriptsize 1. Increases by one}}{\underbrace{(r_{ij}(t)+1) dM_{j {\scriptscriptstyle\setminus} i}(t) (1-dN_i(t))}}
	+\underset{\text{\scriptsize 2. Becomes zero}}{\underbrace{0}} \nonumber +
	\underset{\text{\scriptsize 3. Remains the same}}{\underbrace{r_{ij}(t)(1 - dM_{j {\scriptscriptstyle\setminus} i}(t))(1- dN_i(t))}}, \nonumber
\end{equation}
where each term models one of the three possible situations:
\begin{itemize}[noitemsep,nolistsep,leftmargin=1cm]
\item[1.] The other broadcasters post a story in $(t,t+dt]$, $dM_{j {\scriptscriptstyle\setminus}i}(t)=1$, and broadcaster $i$ does not post, $dN_i(t)=0$. The position of the last story posted by
$i$ in $j$'{}s feed steps down by one, \ie, $r_{ij}(t+dt)=r_{ij}(t)+1$.
\item[2.] Broadcaster $i$ posts a story in $(t,t+dt]$, $dN_i(t)=1$, and the other broadcasters do not, $dM_{j {\scriptscriptstyle\setminus}i}(t)=0$. No matter what the previous rank was, the new
rank is $r_{ij}(t+dt)=0$ since the newly posted story appears at the top of $j$'{}s feed.
\item[3.] No one posts any story in $(t,t+dt]$, $dN_i(t)=0$ and $dM_{j {\scriptscriptstyle\setminus}i}(t)=0$. The rank remains the same, \ie, $r_{ij}(t+dt)=r_{ij}(t)$
\end{itemize}
We skip the case in which $M_{j {\scriptscriptstyle\setminus}i}(t)=1$ and $dN_i(t)=1$ in the same time interval $(t,t+dt]$ because, by the Blumenthal zero-one law~\cite{blumenthal1957extended},
it has zero probability. Now, by rearranging terms and using that $dN_i(t) dM_{j {\scriptscriptstyle\setminus}i}(t)=0$, we uncover the following jump SDE for the visibility
(or rank) dynamics:
\begin{align}\label{eq:rank-dynamics}
dr_{ij}(t) &= -r_{ij}(t)\,dN_i(t) + dM_{j {\scriptscriptstyle\setminus}i}(t).
\end{align}
where $dr_{ij}(t)=r_{ij}(t+dt)-r_{ij}(t)$. Figure~\ref{fig:model} illustrates the concept of visibility for one broadcaster and one follower.

\xhdr{The when-to-post problem}
Given a broadcaster $i$ and her followers $\Ncal(i)$, our goal is to find the optimal conditional intensity $\mu_i(t) = u(t)$ that minimizes the expected value of a 
%
particular nondecreasing convex loss function $\ell(\bm{r}(t), u(t))$ of the broadcaster'{}s visibility on each of her follower'{}s feed, $\bm{r}(t) = [r_{ij}(t)]_{j \in \Ncal(i)}$,
and the intensity itself, $u(t)$, over a time window $(t_0, t_f]$, \ie, 
%
\begin{align} \label{eq:average-loss}
\underset{u(t_0, t_f]}{\text{minimize}} & \quad \EE_{(N_i,\bm{M}_{{\scriptscriptstyle\setminus}i})(t_0, t_f]}\left[ \phi(\bm{r}(t_f)) + \int_{t_0}^{t_f} \ell(\bm{r}(\tau), u(\tau)) d\tau\right] \nonumber \\
\text{subject to} & \quad u(t) \geq 0 \quad \forall t \in (t_0, t_f],
\end{align}
where $u(t_0, t_f]$ denotes user $i$'{}s intensity from $t_0$ to $t_f$, the expectation is taken over all possible realizations of the counting processes associated to user $i$ and all
other broadcasters from $t_0$ to $t_f$, denoted as $(N_i,\bm{M}_{{\scriptscriptstyle\setminus}i})(t_0, t_f]$, and $\phi(\bm{r}(t_f))$ is an arbitrary penalty function\footnote{\scriptsize The
final penalty function $\phi(\bm{r}(t_f))$ is necessary to derive the optimal intensity $u^*(t)$ in Section~\ref{sec:method}. However, the actual optimal intensity $u^*(t)$ does not
depend on the particular choice of terminal condition.}.
%
%
%
%
Here, by considering a nondecreasing loss, we penalize times when the position of the most recent story on each of the follower'{}s feeds is high (\ie, the most recent story does not stay
\emph{at the top}) and we limit the number of stories the broadcaster can post.
Finally, note that the optimal intensity $u(t)$ for broadcaster $i$ at time $t$ may depend on the visibility $\bm{r}(t)$ with respect to each
of her followers and thus the associated counting process $N_i(t)$ may be doubly stochastic.

\section{Stochastic Optimal Control Algorithm}
\label{sec:method}
In this section, we tackle the when-to-post problem defined by Eq.~\ref{eq:average-loss} from the perspective of stochastic optimal control of jump SDEs~\cite{hanson2007}. More specifically, we first derive a solution to the problem considering only one follower, provide an efficient practical implementation of the solution and then generalize it to the case of multiple followers.
We conclude this section by deriving a solution to the problem given an (idealized) oracle that knows the times of all stories in the
followers'{} feeds \emph{a priori}, which we will use as baseline.

\xhdr{Optimizing for one follower} Given a broadcaster $i$ with $N_i(t) = N(t)$ and $\mu_i(t) = u(t)$ and only one of her followers $j$ with $M_{j {\scriptscriptstyle\setminus} i}(t) = M(t)$ and
$\gamma_{j {\scriptscriptstyle\setminus} i}(t)=\lambda(t)$, we can rewrite the when-to-post problem defined by Eq.~\ref{eq:average-loss} as
\begin{align}
\underset{u(t_0, t_f]}{\text{minimize}} & \quad \EE_{(N,M)(t_0, t_f]}\left[ \phi(r(t_f)) + 	\int_{t_0}^{t_f} \ell(r(\tau),u(\tau)) \, d\tau \right] \nonumber \\
\text{subject to} & \quad u(t) \geq 0 \quad \forall t \in (t_0, t_f], \label{eq:average-loss-one-follower}
\end{align}
where, using Eq.~\ref{eq:jsde-intensity} and Eq.~\ref{eq:rank-dynamics}, the dynamics of $M(t)$ and $r(t)$ are given by the following two coupled jump SDEs:
\begin{align}
	dr(t) &= -r(t)\,dN(t) + dM(t) \nonumber \\
	d\lambda(t) &= \left[\lambda_0'(t)+w \lambda_0(t) -w \lambda(t)\right]dt + \alpha \, dM(t), \nonumber
\end{align}
with initial conditions $r(t_0)=r_0$ and $\lambda(t_0)=\lambda_0$, and the dynamics of $N(t)$ are given by the intensity $u(t)$ that we aim to optimize.
The above stochastic optimal control problem differs from previous literature in two key technical aspects, which require careful reasoning:
\begin{itemize}[noitemsep,nolistsep,leftmargin=1cm]
\item[(i)] The control signal $u(t)$ is a conditional intensity, which controls the dynamics of the counting process $N(t)$ (\ie, number of stories broadcasted by user $i$ by time $t$). As a consequence,
the problem formulation needs to account for another layer of stochasticity. Previous work assumes the control signal to be a time-varying real vector.
\item[(ii)] The dynamics of the counting process $M(t)$ (\ie, number of stories broadcasted by other users that $j$ follows by time $t$) are doubly stochastic Markov.
Previous work has typically considered memoryless Poisson processes and, only very recently, inhomogeneous Poisson~\cite{wang2016steering}.
\end{itemize}

Next, we will define a novel optimal cost-to-go function that accounts for the above unique aspects of our problem, showing that the Bellman'{}s principle of optimality still follows, and finally find
the optimal solution using the corresponding Hamilton-Jacobi-Bellman (HJB) equation.
\begin{definition}\label{thm:cost-def}
The optimal cost-to-go $J(r(t),\lambda(t),t)$ is defined as the minimum of the expected value of the cost of going from state $r(t)$ with intensity $\lambda(t)$ at time $t$ to final state at time $t_f$, \ie,
\begin{equation}\label{eq:cost-def}
	\min_{u(t,t_f]} \mathbb{E}_{(N,M)(t,t_f]} \left[ \phi(r(t_f)) + \int_t^{t_f} \ell(r(\tau),u(\tau)) \, d\tau \right ],
\end{equation}
where the expectation is taken over all trajectories of the control and noise jump process, $N$ and $M$, in the $(t,t_f]$ interval, given the initial values of $r(t)$, $\lambda(t)$
and $u(t)$.
\end{definition}
To find the optimal control $u(t, t_f]$ and cost-to-go $J$, we break the problem into smaller subproblems, using the Bellman'{}s principle of optimality, which the above definition
allows (proven in Appendix):
\begin{lemma}[Bellman'{}s Principle of Optimality] \label{thm:bellman}
The optimal cost satisfies the following recursive equation:
\begin{equation}
	J(r(t),\lambda(t),t) =
	\min_{u(t,t+dt]} \mathbb{E} \left[ J(r(t+dt),\lambda(t+dt),t+dt)\right ] + \ell(r(t),u(t)) \, dt. \label{eq:bellman}
\end{equation}
\end{lemma}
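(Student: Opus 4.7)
The plan is the standard dynamic-programming derivation of Bellman's recursion, adapted to the doubly-stochastic setting. The key structural fact I would rely on is that, by Proposition~\ref{thm:jsde-intensity} combined with the rank SDE~\eqref{eq:rank-dynamics}, the pair $(r(t),\lambda(t))$ is jointly Markov: $r$ jumps are driven by $dM$ (which has intensity $\lambda(t)$) and $dN$ (which has intensity $u(t)$), while $d\lambda$ depends only on $\lambda(t)$ and $dM(t)$. Hence, for any admissible feedback intensity $u$, the future of the joint state is determined in distribution by its current value.

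First I would split the horizon as $(t,t_f]=(t,t+dt]\cup(t+dt,t_f]$ and, correspondingly, the control trajectory as $u(t,t_f]=\bigl(u(t,t+dt],\,u(t+dt,t_f]\bigr)$. Substituting into Definition~\ref{thm:cost-def} and decomposing the integrated cost gives
\begin{align*}
J(r(t),\lambda(t),t)
&= \min_{u(t,t+dt]}\min_{u(t+dt,t_f]}\EE\!\left[\int_t^{t+dt}\!\!\ell\,d\tau + \int_{t+dt}^{t_f}\!\!\ell\,d\tau + \phi(r(t_f))\right].
\end{align*}
The factorisation of the outer minimum is legitimate because any admissible policy on $(t,t_f]$ can be prescribed as a policy on $(t,t+dt]$ followed, path by path, by an optimal continuation from the realised state $(r(t+dt),\lambda(t+dt))$.

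Next I would condition the inner expectation on the history up to $t+dt$ and use the tower property. By the Markov property of $(r,\lambda)$ established above, the conditional expectation depends on the past only through $(r(t+dt),\lambda(t+dt))$, so by Definition~\ref{thm:cost-def} the inner minimum is exactly $J(r(t+dt),\lambda(t+dt),t+dt)$. On the infinitesimal leading slice, the Blumenthal zero-one argument already invoked to derive~\eqref{eq:rank-dynamics} ensures that at most one of $dN,dM$ fires with probability $O(dt)$, so $\int_t^{t+dt}\ell(r(\tau),u(\tau))\,d\tau=\ell(r(t),u(t))\,dt+o(dt)$ is deterministic given the initial state and can be pulled outside both the expectation and the outer minimum. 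Collecting terms yields Eq.~\eqref{eq:bellman}.

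The main obstacle I anticipate is justifying the Markov reduction itself: since the control $u(t)$ is a conditional intensity — not a classical deterministic vector signal — one must verify that admissible controls are adapted feedback policies of the state $(r,\lambda)$, so that the state process actually remains Markov under any admissible $u$. Once this is granted, the separation of the minima and the tower-property reduction become routine; this is precisely why Proposition~\ref{thm:jsde-intensity}, which promotes $\lambda$ from a latent random function to a genuine state variable, is indispensable here.
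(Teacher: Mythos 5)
Your proposal follows essentially the same route as the paper's proof: split the horizon at $t+dt$, decompose the integrated cost, apply the tower property to the nested expectations, interchange and separate the two minimizations, and identify the inner quantity as $J(r(t+dt),\lambda(t+dt),t+dt)$. Your additional remarks on why the Markov reduction is legitimate (the joint state $(r,\lambda)$ being Markov under feedback intensities) are a sound justification of a step the paper takes implicitly, but they do not change the argument.
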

where the expectation is taken over all trajectories of the control and noise jump processes, $N$ and $M$, in $(t, t+dt]$.
Then, we use the Bellman'{}s principle of optimality to derive a partial differential equation on $J$, often called the Hamilton-Jacobi-Bellman (HJB)
equation~\cite{hanson2007}.
To do so, we first assume $J$ is continuous and then rewrite Eq.~\ref{eq:bellman} as
\begin{align}
	J(r(t),\lambda(t),t) &= \min_{u(t,t+dt]} \mathbb{E} \left[ J(r(t),\lambda(t),t) + dJ(r(t),\lambda(t),t) \right ] +\ell(r(t),u(t)) \, dt \nonumber \\ 
	0 = \min_{u(t,t+dt]} & \mathbb{E} \left[dJ(r(t),\lambda(t),t) \right ] + \ell(r(t),u(t)) \, dt. \label{eq:diff-j-eq-short} 
\end{align}

Then, we differentiate $J$ with respect to time $t$, $r(t)$ and $\lambda(t)$ using Lemma~\ref{thm:differentialSDE} (refer to
Appendix).
\begin{algorithm}[t]
\small
\DontPrintSemicolon 
\KwIn{Parameters $q$ and $s$}
\KwOut{Returns time for the next post}
$t \gets \infty$;\, $\tau \gets othersNextPost(\,)$ \;
\While{$\tau < t$} {
  $\Delta \sim \exp(\sqrt{{s}/{q}}) $\;
  $t \gets \min(t,\,\tau+\Delta)$\;
  $\tau \gets othersNextPost(\,)$\;
}
\Return $t$\;
\caption{\redqueen{} for fixed $s$, $q$ and one follower.}
\label{alg:sampling}
\end{algorithm}
Specifically, consider $x(t) = r(t)$, $y(t) = \lambda(t)$ and $F = J$ in the above mentioned lemma, then,
\begin{align}
	dJ(r(t),\lambda(t),t) &= J_t(r(t),\lambda(t),t)dt  + \left[\lambda_0'(t) + w \lambda_0(t)-w\lambda(t) \right]J_{\lambda}(r(t),\lambda(t),t)dt \nonumber \\*
	&+[J(0,\lambda(t),t)-J(r(t),\lambda(t),t)] dN(t) + [J(r(t)+1,\lambda(t)+\alpha,t)-J(r(t),\lambda(t),t)] dM(t) \nonumber
\end{align}

Next, if we plug in the above equation in Eq.~\ref{eq:diff-j-eq-short}, it follows that
\begin{align}
	0 &= \min_{u(t,t+dt]} \Big\{J_t(r(t),\lambda(t),t) dt +\left[\lambda_0'(t)+ w\lambda_0(t)-w\lambda(t) \right]J_{\lambda}(r(t),\lambda(t),t) dt \nonumber \\*
	&+[J(0,\lambda(t),t)-J(r(t),\lambda(t),t)] \mathbb{E}[dN(t)] + [J(r(t)+1,\lambda(t)+\alpha,t)-J(r(t),\lambda(t),t)] \mathbb{E}[dM(t)] + \ell(r(t),u(t)) \, dt \Big\}.  
\end{align}
Now, using $\mathbb{E}[dN(t)] = u(t)dt$ and $\mathbb{E}[dM(t)] = \lambda(t) dt$, and rearranging terms, the HJB equation follows:
\begin{align}
	0&= J_t(r(t),\lambda(t),t) +\left[\lambda_0'(t)+ w\lambda_0(t)-w\lambda(t) \right]J_{\lambda}(r(t),\lambda(t),t)+[J(r(t)+1,\lambda(t)+\alpha,t)-J(r(t),\lambda(t),t)]\lambda(t) \nonumber \\*
	&+\min_{u(t,t+dt]} \ell(r(t),u(t)) + [J(0,\lambda(t),t)-J(r(t),\lambda(t),t)]u(t).  \label{eq:bellman-pde-min} 
\end{align}
To be able to continue further, we need to define the loss $\ell$ and the penalty $\phi$. Following the literature on stochastic optimal
control~\cite{hanson2007}, we consider the following quadratic forms, which will turn out to be a tractable choice\footnote{\scriptsize
Considering other losses with a specific semantic meaning (\eg, $\II(r(t) \leq k)$) is a challenging direction for future work.}:
\begin{equation}
	\phi(r(t_f)) = \frac{1}{2}r^2(t_f) \quad \text{and} \quad \ell(r(t),u(t)) = \frac{1}{2} s(t) \,r^2(t) + \frac{1}{2} q\,u^2(t), \nonumber
\end{equation}
where $s(t)$ is a time significance function $s(t) \geq 0$, which favors some periods of times (\eg, times in which the follower is online\footnote{\scriptsize
Such information may be hidden but one can use the followers'{} posting activity or geographic location as a proxy~\cite{karimi2016smart}.}),
and $q$ is a given parameter, which trade-offs visibility and number of broadcasted posts.
\begin{algorithm}[t]
\small
\DontPrintSemicolon 
\KwIn{Initial state $r_0$, interval widths $w_1,\ldots,w_{m+1}$, parameter $q$ and significance $s(t) = s$}
\KwOut{Overall cost $J(r_0,0)$, optimal control $u_0^*,\ldots,u_m^*$}
\For{$r \gets r_0+m$ \KwTo $0$}{
$J(r,m+1) \gets \frac{1}{2}r^2$\;
}
\For{$k \gets m$ \KwTo $0$}{
\For{$r \gets r_0+k-1$ \KwTo $0$}{
$J(r,k)=\min\{\frac{1}{2}q + J(0,k+1)  ,\, \frac{1}{2} s w_{k+1}(r+1)^2 + J(r+1,k+1)\}$\;
}
}
\For{$k \gets 0$ \KwTo $m$}{
\eIf{$\frac{1}{2}q + J(0,k+1) < \frac{1}{2} s w_{k+1}(r_k+1)^2 + J(r_k+1,k+1)$}{\
  $u_k^* \gets 1;\, r_{k+1} \gets 0$\;
}{
  $u_k^* \gets 0;\, r_{k+1} \gets r_k+1$\;
}
}
\Return{$J(r_0,0),\, u_0^*,\ldots,u_m^*$}
\caption{Optimal posting times with an oracle.}
\label{alg:oracle}
\end{algorithm}

Under these definitions, we take the derivative with respect to $u(t)$ of Eq.~\ref{eq:bellman-pde-min} and
uncover the relationship between the optimal intensity and the optimal cost:
\begin{align} \label{eq:relationship-rate-j}
	u^*(t) = q^{-1}\left[J(r(t),\lambda(t),t) - J(0,\lambda(t),t)\right].
\end{align}
Finally, we substitute the above expression in Eq.~\ref{eq:bellman-pde-min} and find that the optimal cost $J$ needs to satisfy the following
nonlinear differential equation:
\begin{align}
	0&=J_t(r(t),\lambda(t),t) + \left[\lambda_0'(t)+w\lambda_0(t)-w\lambda(t)\right]J_{\lambda}(r(t),\lambda(t),t) +[J(r(t)+1,\lambda(t)+\alpha,t)-J(r(t),\lambda(t),t)]\lambda(t) \nonumber\\
	&+\frac{1}{2}s(t)\,r^2(t)-\frac{1}{2}q^{-1}\left[J(r(t),\lambda(t),t)-J(0,\lambda(t),t)\right]^2 \label{eq:bellman-pde}
\end{align}
with $J(r(t_f),\lambda(t_f),t_f)=\phi(r(t_f))$ as the terminal condition. The following lemma provides us with a solution to the above
equation (proven in Appendix):
\begin{lemma} \label{lem:opt-con-sol}
Any solution to the nonlinear differential equation given by Eq.~\ref{eq:bellman-pde} can be approximated as closely
as desired by
\begin{align*}
	J(r(t),\lambda(t),t) = f(t)+ \sqrt{{s(t)}/{q}} \,  r(t) + \sum_{j=1}^m g_{j}(t)\lambda^j(t),
\end{align*}
where $f(t)$ and $g_{j}(t)$ are time-varying functions, and $m$ controls for the approximation guarantee.
\end{lemma}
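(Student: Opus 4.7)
The natural approach is an ansatz-and-match argument. The plan is to substitute the proposed form
\[ J(r,\lambda,t) = f(t) + h(t)\, r + \sum_{j=1}^m g_j(t)\, \lambda^j \]
(with $h(t) = \sqrt{s(t)/q}$) into the HJB equation~\eqref{eq:bellman-pde}, collect coefficients of the independent monomials in $r$ and $\lambda$, and verify that the time-varying functions $f,g_1,\ldots,g_m$ can be chosen so that the residual is made arbitrarily small by taking $m$ large. Two structural features make this ansatz tractable: first, the difference $J(r,\lambda,t)-J(0,\lambda,t)=h(t)\,r$ is independent of $\lambda$, so its square contributes only to the $r^2$ coefficient; second, the jump increment $J(r+1,\lambda+\alpha,t)-J(r,\lambda,t)=h(t)+\sum_{j=1}^m g_j(t)\bigl[(\lambda+\alpha)^j-\lambda^j\bigr]$ is independent of $r$ and polynomial in $\lambda$.

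I would first match the $r$-dependent part. Because $J$ is linear in $r$, the only $r^2$ contribution in~\eqref{eq:bellman-pde} is $\tfrac{1}{2}\,s(t)\,r^2-\tfrac{1}{2q}\,h^2(t)\,r^2$, which vanishes and pins down $h(t)$ algebraically in terms of $s$ and $q$. The remaining $r$-linear term is $h'(t)\,r$ coming from $J_t$; it is identically zero when $s(t)$ is constant and, for slowly varying $s$, is a small residual that can be absorbed into the approximation. Next I would handle the $r$-independent part, which is a polynomial equation in $\lambda$: multiplying the jump increment by $\lambda$ and expanding $(\lambda+\alpha)^j$ via the binomial theorem, I can read off the coefficients of $\lambda^0,\ldots,\lambda^{m+1}$. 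Matching $\lambda^0$ through $\lambda^m$ yields a triangular system of first-order ODEs in $f(t),g_1(t),\ldots,g_m(t)$, solvable backward from the terminal data top-down — the equation for $g_m$ involves only $g_m$, the one for $g_{m-1}$ involves $g_m$ and $g_{m-1}$, and so on. The unmatched $\lambda^{m+1}$ coefficient, of order $\alpha\, g_m(t)\,\lambda^{m+1}$, is the sole source of approximation error.

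The main obstacle is quantifying this truncation error. Solving the triangular ODE system and pinning $h$ algebraically are mechanical; the nontrivial step is showing that the residual can be made uniformly small as $m\to\infty$. This needs \emph{a priori} control on $\|g_m\|$ and on the range of $\lambda(t)$ over the finite horizon $(t_0,t_f]$, which is reasonable for a Hawkes-type intensity with exponential kernel because its moments stay bounded on any finite window. A secondary issue is that a linear-in-$r$ ansatz cannot match the quadratic terminal penalty $\phi(r)=\tfrac{1}{2}r^2$ exactly; this is acceptable in view of the footnote noting that the actual optimal intensity $u^*$ does not depend on the specific choice of terminal condition.
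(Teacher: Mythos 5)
Your computation is sound as far as it goes, and several of your observations (the $r^2$ coefficient pinning down $h(t)$ algebraically, the triangular first-order ODE system for $f,g_1,\ldots,g_m$ solvable backward from the terminal data, the unmatched $\lambda^{m+1}$ coefficient as the sole truncation error) coincide with what the paper does \emph{after} it has reduced to the restricted form. But there is a gap in the logical direction of your argument. The lemma is universally quantified: \emph{any} solution of Eq.~\ref{eq:bellman-pde} is approximated by $f(t)+h(t)\,r+\sum_{j}g_j(t)\lambda^j$. You instead verify that this ansatz can be made to (approximately) satisfy the equation, which is an existence statement; without a uniqueness theorem for the HJB equation with the given terminal data --- which you do not invoke --- it does not rule out solutions with a genuinely different dependence on $r$, and it is precisely the $r$-dependence of $J$ that determines the optimal control via $u^*(t)=q^{-1}\left[J(r(t),\lambda(t),t)-J(0,\lambda(t),t)\right]$. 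Proving that the ansatz works is not the same as proving that nothing else does.

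The paper closes this gap by starting from the \emph{general} bivariate polynomial $J=\sum_{i=0}^{n}\sum_{j=0}^{m}f_{ij}(t)\,r^i\lambda^j$, which by the Stone--Weierstrass theorem approximates any continuous solution, and then showing that the structure of Eq.~\ref{eq:bellman-pde} forces the unwanted coefficients to vanish. The mechanism is the quadratic nonlinearity $-\tfrac{1}{2}q^{-1}\left[J(r,\lambda,t)-J(0,\lambda,t)\right]^2$: if the top degree in $r$ is $n\geq 2$, this term produces an $r^{2n}$ monomial with coefficient $-\tfrac{1}{2}q^{-1}f_{n0}^2$ that no other term in the equation can balance (the drift, jump and running-loss terms all have strictly lower degree in $r$), so $f_{n0}=0$; descending inductively kills all $f_{i0}$ with $i\geq 2$, and the analogous argument on the coefficients of $r^{2i}\lambda^{2j}$ eliminates the mixed terms. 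Only then does the coefficient of $r^2$ give $\tfrac{1}{2}s-\tfrac{1}{2}q^{-1}f_{10}^2=0$, so that the linear-in-$r$ form is \emph{derived} rather than assumed. Your secondary remarks --- the residual $h'(t)\,r$ term for time-varying $s$, and the mismatch between the linear ansatz and the quadratic terminal penalty --- are fair criticisms of the paper's own write-up (whose appendix tacitly treats $s$ as constant), but they do not substitute for the missing elimination argument.
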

Given the above Lemma and Eq.~\ref{eq:relationship-rate-j}, the optimal intensity is readily given by
following theorem:
\begin{theorem}
The optimal intensity for the when-to-post problem defined by Eq.~\ref{eq:average-loss-one-follower} with quadratic loss and penalty function is given by
$u^*(t) = \sqrt{{s(t)}/{q}} \, r(t)$.
\end{theorem}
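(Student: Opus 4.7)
The theorem is essentially an immediate consequence of two ingredients that have already been established in the excerpt: the pointwise minimization of the HJB equation, which yields the closed form
\[
u^*(t) = q^{-1}\bigl[J(r(t),\lambda(t),t) - J(0,\lambda(t),t)\bigr]
\]
(Eq.~\ref{eq:relationship-rate-j}), and the parametric form of the optimal cost-to-go guaranteed by Lemma~\ref{lem:opt-con-sol}. The plan, therefore, is just to substitute the latter into the former and observe that almost everything cancels.

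My first step would be to invoke Lemma~\ref{lem:opt-con-sol} and write $J(r(t),\lambda(t),t) = f(t) + c(t)\, r(t) + \sum_{j=1}^{m} g_j(t)\lambda^j(t)$, where $c(t)$ is the coefficient that the lemma pins down in terms of $s(t)$ and $q$ (i.e.\ the one arising from matching the $r^2$ terms in the HJB equation \eqref{eq:bellman-pde}). The second step is to evaluate this expression at $r = 0$: since the only place $r$ appears is the linear term $c(t)\, r$, one has $J(0,\lambda(t),t) = f(t) + \sum_j g_j(t)\lambda^j(t)$, so that
\[
J(r(t),\lambda(t),t) - J(0,\lambda(t),t) = c(t)\, r(t),
\]
with both the time-dependent offset $f(t)$ and the entire $\lambda$-polynomial cancelling identically. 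The third step is to plug this into Eq.~\ref{eq:relationship-rate-j}, yielding $u^*(t) = q^{-1} c(t)\, r(t)$, which reduces to the claimed $u^*(t) = \sqrt{s(t)/q}\, r(t)$ once the precise value of $c(t)$ from the lemma is inserted and simplified.

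There is really no substantive obstacle inside this particular theorem: the work has already been done upstream, in (i) the derivation of Eq.~\ref{eq:relationship-rate-j} via pointwise minimization of the quadratic-in-$u$ HJB right-hand side, and (ii) the much harder Lemma~\ref{lem:opt-con-sol} that justifies the ansatz for $J$ and matches coefficients to determine $c(t)$. The only thing worth double-checking is the positivity constraint $u(t) \geq 0$ from the original problem \eqref{eq:average-loss-one-follower}: since $r(t) \geq 0$ almost surely (it is a rank) and $s(t), q \geq 0$, the unconstrained minimizer $\sqrt{s(t)/q}\, r(t)$ is automatically feasible, so no projection step is needed and the HJB solution is indeed the constrained optimum. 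With that caveat noted, the theorem follows in one line from the two preceding results.
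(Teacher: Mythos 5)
Your proposal is correct and takes essentially the same route as the paper, which presents the theorem as an immediate consequence of substituting the form of $J$ from Lemma~\ref{lem:opt-con-sol} into Eq.~\ref{eq:relationship-rate-j} so that $f(t)$ and the entire $\lambda$-polynomial cancel, leaving only the linear-in-$r$ term. Note only that for the arithmetic to close one needs $c(t)=\sqrt{s(t)\,q}$ (the value actually derived in the appendix from matching the $r^2$ coefficients, giving $u^*(t)=q^{-1}\sqrt{s(t)\,q}\,r(t)=\sqrt{s(t)/q}\,r(t)$), rather than the $\sqrt{s(t)/q}$ printed in the statement of Lemma~\ref{lem:opt-con-sol}; your choice to keep $c(t)$ generic quietly absorbs this inconsistency.
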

The optimal intensity only depends on the position of the most recent post by user $i$ in her follower'{}s feed and thus allows for a very
efficient procedure to sample posting times, which exploits the superposition theorem~\cite{Kingman1992}.
The key idea is as follows: at any given time $t$, we can view the process defined by the optimal intensity as a superposition of $r(t)$
inhomogeneous poisson processes with intensity $\sqrt{{s(t)}/{q}} \, r(t)$ which starts at jumps of the rank $r(t)$, and find the next sample by
computing the minimum across all samples from these processes.
Algorithm~\ref{alg:sampling} summarizes our (sampling) method, which we name \redqueen~\cite{carroll1917through}.
%
Within the algorithm, $othersNextPost(\,)$ returns the time of the next event by other broadcasters in the followers'{} feeds, once the events happens. In practice, we only need to know if the event happens before we post.
Remarkably, it only needs to sample $M(t_f)$ times from a (exponential) distribution (if significance is constant) and requires $O(1)$ space.
\vspace{2mm}

\xhdr{Optimizing for multiple followers}
Given a broadcaster $i$ with $N_i(t) = N(t)$ and $\mu_i(t) = u(t)$ and her followers $\Ncal(i)$ with $\bm{M}_{{\scriptscriptstyle\setminus} i}(t) = \bm{M}(t)$ and
$\gammab_{{\scriptscriptstyle\setminus} i}(t) = \lambdab(t)$, the dynamics of $\bm{M}(t)$ and $\bm{r}(t)$, which we need to solve Eq.~\ref{eq:average-loss}, are given
by:

\begin{align}
	d\bm{r}(t) &= -\bm{r}(t)\,dN(t) + d{\bm{M}}(t) \nonumber \\
	d\bm{\lambda}(t) &= \left[\bm{\lambda}_0'(t)+\bm{w} \odot \bm{\lambda}_0(t) -\bm{w}\odot\bm{\lambda}(t)\right]dt + \bm{\alpha} \odot d\bm{M}(t), \nonumber
\end{align}
where $\odot$ is the element-wise product and $\bm{\alpha}=[\alpha_1,\cdots,\alpha_n]^T$ and $\bm{w}=[w_1,\cdots,w_n]^T$ are the parameters
defining each of the followers'{} feed dynamics, and $n=|\Ncal(i)|$ is the number of followers.

Consider the following quadratic forms for the loss $\ell$ and the penalty $\phi$:
\begin{align}
	\phi(\bm{r}(t_f)) &= \sum_{i=1}^n \frac{1}{2} r_i^2(t_f) \nonumber \\
	\ell(\bm{r}(t),u(t),t) &= \sum_{i=1}^n \frac{1}{2} s_i(t) r_i^2(t) + \frac{1}{2} q \,u^2(t). \nonumber
\end{align}
where $s_i(t)$ is the time significance function for follower $i$, as defined above, and $q$ is a given parameter.
Then, proceeding similarly as in the case of one follower, we can show that: 
%
%
\begin{align}\label{eq:optimal-rate-multi}
	u^*(t) = \sum_{i=1}^{n} \sqrt{{s_i(t)}/{q}} \, r_i(t),
\end{align}
which only depends on the position of the most recent post by user $i$ in her followers'{} feeds. Finally, we can readily adapt \redqueen (Algorithm~\ref{alg:sampling}) to
efficiently sample the posting times using the above intensity -- it only needs to sample $|\cup_{j \in \Ncal(i)} \Fcal_{j \backslash i}(t_f)|$ values and requires $O(|\Ncal(i)|)$ space.

%
%

\vspace{2mm}
\xhdr{Optimizing with an oracle}
In this section, we consider a broadcaster $i$ with $N_i(t) = N(t)$ and $\mu_i(t) = u(t)$, only one of her followers $j$ with $M_{j {\scriptscriptstyle\setminus} i}(t)=M(t)$,
and a constant significance $s(t) = s$. The derivation can be easily adapted to the case of multiple followers and time-varying significance.

Suppose there is an (idealized) oracle that reveals $M(t)$ from $t_0$ to $t_f$, \ie, the history $\Fcal_{j {\scriptscriptstyle\setminus} i}(t_f) = \Fcal(t_f)$ is given, and $M(t_f) = |\Fcal(t_f)| = m$.
Then, we can rewrite Eq.~\ref{eq:average-loss} as 
\begin{align}
\underset{u(t_0, t_f]}{\text{minimize}} & \quad \EE_{N(t_0, t_f]}\left[ \phi(r(t_f)) + 	\int_{t_0}^{t_f} \ell(r(\tau),u(\tau)) \, d\tau \right] \nonumber \\*
\text{subject to} & \quad u(t) \geq 0 \quad \forall t \in (t_0, t_f], \nonumber
\end{align}
where the expectation is only taken over all possible realizations of the counting process $N(t_0, t_f]$ since $M(t_0, t_f]$ is revealed by the oracle and thus
deterministic.
%
%
\begin{figure}[t]
	\centering
	\adjustbox{trim={0.25\width} 0 {0.25\width} 0,clip}{\includegraphics[width=1.2\textwidth]{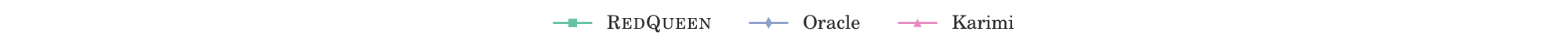}} \\
	\subfloat[\normalsize Position over time]{ 
	\includegraphics[width=.3\textwidth]{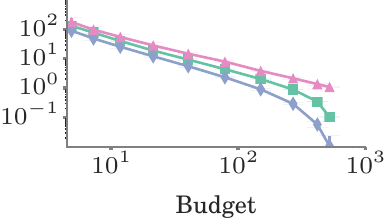}} \hspace{1mm}
	\subfloat[\normalsize Time at the top]{ 
	\includegraphics[width=.3\textwidth,]{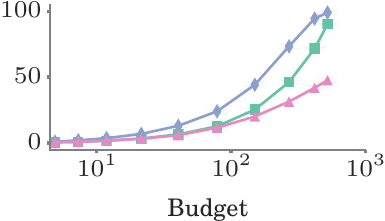}}
	\caption{Optimizing for one follower. Performance of \redqueen in comparison with the oracle and the method by Karimi et al.~\cite{karimi2016smart}
	against number of broadcasted events. The feeds counting processes $M(t)$ due to other broadcasters are Hawkes processes with $\lambda_0=10$,
	$\alpha=1$ and $w=10$. 	In all cases, the time horizon $t_f - t_0$ is chosen such that the number of stories posted by other broadcasters is $\sim$$1000$.
	Error bars are too small to be seen.}
	\label{fig:performance-synthetic-one-follower}
\end{figure}

Similar to the previous sections, assume the loss $\ell$ and penalty $\phi$ are quadratic. It is easy to realize that the best times for user $i$ to post will always coincide with one
of the times in $\Fcal(t_f)$. More specifically, given a posting time $\tau_i \in (t_k, t_{k+1})$, where $t_{k}, t_{k+1} \in \Fcal(t_f)$, one can reduce the cost by $(1/2)q(\tau_i-t_k)r^2(t_k)$ by
choosing instead to post at $t_k$.
As a consequence, we can discretize the  dynamics of $r(t)$ in times $\Fcal(t_f)$, and write $r_{k+1} = r_{k}+1 - (r_{k} + 1) u_k$,
%
%
where $r_k = r(t_k^{\scriptscriptstyle-})$, $u_k = u(t_k^{\scriptscriptstyle+}) \in \{0,1\}$,  $t_k \in \Fcal(t_f)$. We can easily see that $r_k$ is bounded by $0 \leq r_k < r_0+m$. Similarly, we can derive the optimal cost-to-go in
discrete-time as:
%
\begin{align*}
	J(r_k,k) & = \min_{u_k,\ldots,u_m} \frac{1}{2}r_{m+1}^2 + \sum_{i=k}^{m} \frac{1}{2} q \, w_{i+1}\,r_{i+1}^2 + \frac{1}{2} s\,u_i^2,
\end{align*}
where $w_i = t_{i} - t_{i-1}$. 
Next, we can break the minimization and use Bellman'{}s principle of optimality,
\begin{align*}
	J(r_k,k) = \min_{u_k}  \frac{1}{2} q \, w_{k+1} \, r_{k+1}^2 + \frac{1}{2}s\,u_k^2+J(r_{k+1},k+1),
\end{align*}
and, since $u_k \in \{0, 1\}$, the above recursive equation can be written as
\begin{equation}
J(r_k,k) = \min \left\{\frac{1}{2} s + J(0,k+1)  ,\, \frac{1}{2}q\,w_{k+1}(r_{k}+1)^2 + J(r_{k}+1,k+1) \right\}. \nonumber
\end{equation}
Finally, we can find the optimal control $u^{*}_{k},\, k=0, \ldots, m$ and cost $J(r_0,0)$ by backtracking from the terminal condition $J(r_{m+1},m+1)=r_{m+1}^2/2$
to the initial state $r_0$, as summarized in Algorithm~\ref{alg:oracle}, which can be adapted to multiple followers.
Note that, in this case, the optimal strategy is not stochastic and consists of a set of optimal posting times, as one could have guessed.
However, for multiple followers, the complexity of the algorithm is $O(m^{2})$, where
$m = |\cup_{j \in \Ncal(i)} \Fcal_{j \backslash i}(t_f)|$.
%




\section{Experiments}
\label{sec:experiments}
\begin{figure}[t]
      \centering
      \adjustbox{trim={0.25\width} 0 {0.25\width} 0,clip}{\includegraphics[width=1.2\textwidth]{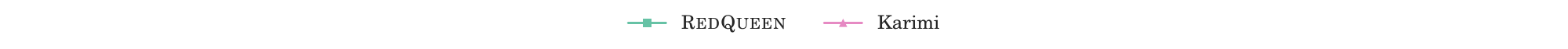}} \\
      \begin{tabular}{cc}
      \includegraphics[width=.3\textwidth]{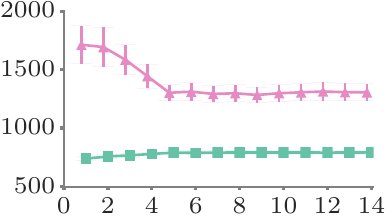} &
      {\includegraphics[width=.3\textwidth]{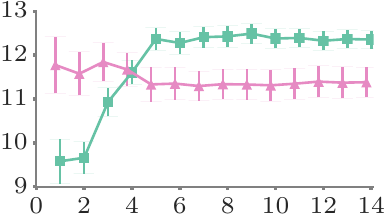}} \\
      {(a) Position over time} & {(b) Time at the top}
      \end{tabular} \label{fig:performance-synthetic-multiple-followers-number}
      \caption{Optimizing for multiple followers. Performance of \redqueen in comparison with the method by Karimi et al.~\cite{karimi2016smart} against number of followers.
      The feeds counting processes $M(t)$ due to other broadcasters follow piecewise constant intensities, where the intensity of each follower remains constant within each piece,
      it varies as a half-sinusoid across pieces and it starts with a random initial phase. The performance of both methods stays constant upon addition of more followers.
      %
      }
	\label{fig:performance-synthetic-multiple-followers}
\end{figure}

\subsection{Experiments on synthetic data}

\xhdr{Experimental setup}
We evaluate the performance via two quality measures: position over time, ${\scriptstyle \int_{0}^{T} r(t) dt}$, 
and time at the top, ${\scriptstyle \int_{0}^{T} \II(r(t) < 1) dt}$ and compare the performance of \redqueen against the oracle, described in Section~\ref{sec:method}, 
and the method by Karimi et al.~\cite{karimi2016smart}, which, to the best of our knowledge, is the state of the art.
Unless otherwise stated, we set the significance $s_i(t) = 1,\, \forall\, t, i$ and use the parameter $q$ to control the number of posts by \redqueen\footnote{\scriptsize The expected number of posts by \redqueen{} are a decreasing function of $q$. Hence, we can use binary search to guess $q$ and then use averaging over multiple simulation runs to estimate the number of posts made.}.

\vspace{2mm}
\xhdr{Optimizing for one follower}
We first experiment with one broadcaster and one follower against an increasing number of events (or budget).
We generate the counting processes $M(t)$ due to other broadcasters using Hawkes processes, which are particular instances of
the general functional form given by Eq.~\ref{eq:intensity}. We perform {$10$ independent simulation runs} and compute the average and standard error (or standard deviation) of the quality measures.
Fig.~\ref{fig:performance-synthetic-one-follower} summarizes the results, which show that our method: (i) consistently outperforms the method by Karimi et al. by large margins;
(ii) achieves at most $3$$\times$ higher position over time than the oracle as long as the budget is $<$$30$\% of the posted events by all other broadcasters; and, (iii)
achieves $>$$40$\% of the value of time at the top that the oracle achieves.

\vspace{2mm}
\xhdr{Optimizing for multiple followers}
Next, we ex\-pe\-ri\-ment with one broadcaster and multiple followers.
In this case, we generate the counting processes $M(t)$ due to other broadcasters using piece-constant intensity functions. More specifically,
we simulate the feeds of each follower for $1$ day, using $24$ $1$-hour long segments, where the rate of posts remains constant per follower 
in each segment and the rate itself varies as a half-sinusoid (\ie, from $\sin{0}$ to $\sin{\pi}$), with each follower starting with a random initial phase.
This experimental setup reproduces volume changes throughout the day across followers'{} feeds in different time-zones and closely resembles the 
settings in previous work~\cite{karimi2016smart}.
The total number of posts by the \redqueen broadcaster is kept nearly constant and is used as the budget for the other baselines. Additionally, for Karimi'{}s 
method, we provide as input the true empirical rate of tweets per hour for each user. Here, we do not compare with the oracle since, due to its quadratic
complexity, it does not scale.


Figure~\ref{fig:performance-synthetic-multiple-followers} summarizes the results. In terms of position over time, \redqueen outperforms Karimi'{}s method by a factor of 
$2$.
In terms of time at the top, \redqueen achieves $\sim$18\% lower values than Karimi'{}s method for $1$-$4$ followers but $\sim$10\% higher values for $>$$5$ followers.
A potential reason for Karimi'{}s method to performs best in terms of time at the top for a low number of followers and piecewise constant intensities is that, while the number of 
followers is low, there are segments which are clearly favorable and thus Karimi'{}s method concentrates posts on those, however, as the number of followers increases, there
are no clear favorable segments and thus advance planning does not give Karimi'{}s method any advantage. 
On the other hand, \redqueen, due to its online nature, is able to adapt to transient variations in the feeds. 
%

\subsection{Experiments on real data}

\begin{figure}[t]
	\centering
        \begin{tabular}{cc}
        {\includegraphics[width=.3\textwidth]{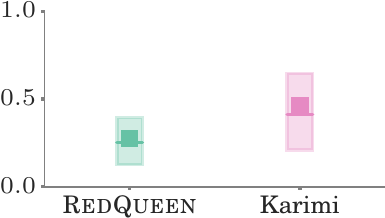}} & {\includegraphics[width=.3\textwidth]{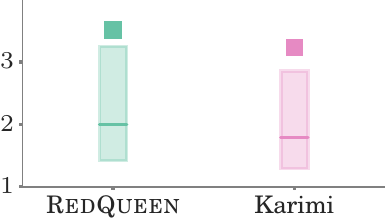}}\\
        {(a) Position over time} & {(b) Time at the top}
        \end{tabular}
	\caption{Performance of \redqueen and the method by Karimi et al.~\cite{karimi2016smart} for $2000$ Twitter users, picked
	at random. The solid horizontal line (square) shows the median (mean) quality measure, normalized with respect to the value achieved by the users'{} actual
	\emph{true} posts, and the box limits correspond to the 25\%-75\% percentiles.}
	\label{fig:performance-real}
\end{figure}

\xhdr{Dataset description and experimental setup}
We use data gathered from Twitter as reported in previous work~\cite{cha2010measuring}, which comprises profiles of $52$ million
users, $1.9$ billion directed follow links among these users, and $1.7$ billion public tweets posted by the collected users. The follow
link information is based on a snapshot taken at the time of data collection, in September 2009.
Here, we focus on the tweets published during a two month period, from July 1, 2009 to September 1, 2009, in order to be able to consider
the social graph to be approximately static, and sample $2000$ users uniformly at random as broadcasters and record all the tweets
they posted.
Then, for each of these broadcasters, we track down their followers and record all the (re)tweets they posted as well as reconstruct
their timelines by collecting all the (re)tweets published by the people they follow.
We assign equal significance to each follower but filter out those who follow more
than $500$ people since, otherwise, they would dominate the optimal strategy.
Finally, we tune $q$ such that the total number of tweets posted by our method is equal to the number of tweets the broadcasters
tweeted during the two month period (with a tolerance of $10\%$).

\xhdr{Solution quality}
We only compare the performance of our method against the method by Karimi et al.~\cite{karimi2016smart} since the oracle does not scale to the size of real data.
Moreover, for the method by Karimi et al., we divide the two month period into ten segments of approximately one week to fit the piecewise constant intensities of the
followers'{} timelines, which the method requires.
Fig.~\ref{fig:performance-real} summarizes the results by means of box plots, where position over time and time at the top are normalized with respect to the value
achieved by the broadcasters'{} actual \emph{true} posts during the two month period.
That means, if $y=1$, the optimized intensity achieves the same position over time or time at the top as the broadcaster's{} true posts.
In terms of position over time and time at the top, \redqueen consistently outperforms competing methods by large margins and achieves $0.28$$\times$ lower average position
and $3.5$$\times$ higher time at the top, in average, than the broadcasters'{} true posts -- in fact, it achieves lower position over time (higher time at the top) for
$100$\% ($99.1$\%) of the users.

\xhdr{Time significance} We look at the actual broadcasting strategies for one real user and investigate the effect of a time varying
significance. We define $s_i(t)$ to be the probability that follower $i$ is online on that weekday, estimated empirically using the (re)tweets the follower posted as in
Karimi et al.~\cite{karimi2016smart}.
Fig.~\ref{fig:individual-user-real} compares the position over time for the most recent tweet posted by a real user against the most recent one \emph{posted} by a
simulation run of \redqueen with and without time varying significance. We can see that without significance information, \redqueen posts at nearly an even pace.
However, when we supply empirically estimated significance, \redqueen avoids tweeting at times the followers are unlikely to be active, \ie, the weekends, denoted by the shaded areas in panel (c) of Fig.~\ref{fig:individual-user-real}.
Due to this, the average position (maximum position) falls from $389{.}45$ ($1085{.}17$) to $425{.}25$ ($1431{.}0$), but is still lower than $698{.}04$ ($2597{.}9$) 
obtained by the user's original posting schedule.
\begin{figure*}[t!]
	\centering
	\begin{tabular}{cc}
        \includegraphics[width=.3\textwidth]{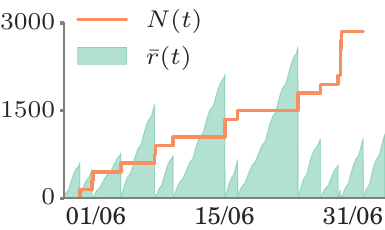} &
	\includegraphics[width=.3\textwidth]{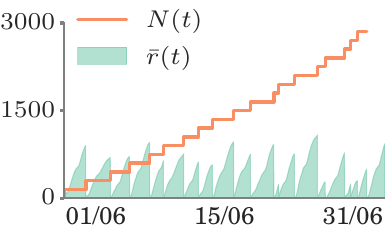} \\
	 {(a) True posts} & {(b) \redqueen (without significance)} \\
	 {$\frac{1}{T} \int_{0}^{T} \bar{r}(t) dt = 698.04$} & {$\frac{1}{T} \int_{0}^{T} \bar{r}(t) dt = 389.45$} \\
	 \vspace{2mm} {} \\
	\includegraphics[width=.3\textwidth]{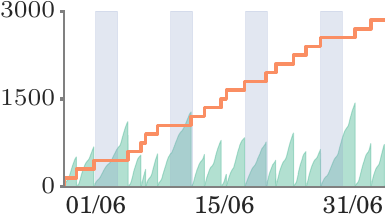} &
	\includegraphics[width=.3\textwidth]{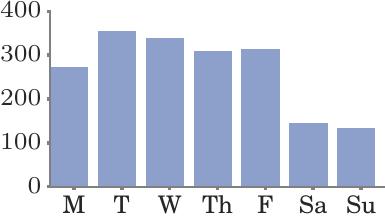} \\
	{(c) \redqueen (with significance)}  & {(d) Followers' (re)tweets per weekday}\\
  	 {$\frac{1}{T} \int_{0}^{T} \bar{r}(t) dt = 425.25$} & {}
	 \end{tabular}
	\caption{A broadcaster chosen from real data. Panels compares the position over time $\bar{r}(t) = \sum_{i=0}^{\Ncal} r(t) / \Ncal$ (in green; lower is better) for the most recent tweet posted by a real user against the most
	recent one \emph{posted} by a simulation run of \redqueen without and with significance. Here, the orange staircases represent the counts $N(t)$ of the tweets posted by the real user and \redqueen over time. The shaded area in panel (c) highlights weekends. We can see that \redqueen avoided tweeting on weekends, when the followers are less likely to be active/logged-in, as seen in panel (d).}
	\label{fig:individual-user-real}
\end{figure*}


%
%
%

\section{Conclusions}
\label{sec:conclusions}
In this paper, we approached the when-to-post problem from the perspective of stochastic optimal control and showed that the optimal
broadcasting strategy is surprisingly simple -- it is given by the position of her most recent post on each of her follower'{}s feed. Such
a strategy can be implemented using a simple and efficient on-line algorithm.
We experimented with synthetic and real-world data gathered from Twitter and showed that our algorithm consistently makes a user'{}s
posts more visible over time and it significantly outperforms the state of the art.

Our work also opens many venues for future work. For example, in this work, we considered social networks that sort stories in the users'{}
feeds in inverse chronological order (\eg, Twitter, Weibo). Extending our methodology to social networks that sort stories algorithmically
(\eg, Facebook) is a natural next step.
Currently, \redqueen optimizes a quadratic loss on the position of a broadcaster'{}s most recent post on her followers'{} feeds over time. However, it would be useful to derive optimal broadcasting intensities for other losses, \eg, time at the top.
Moreover, we assume that only one broadcaster is using \redqueen. A very interesting follow-up would be augmenting our
framework to consider multiple broadcasters under cooperative, competitive and adversarial environments.
Finally, the novel technical aspects of our problem formulation, \eg, optimal control of jump SDEs with double stochastic temporal
point processes, can be applied to other control problems in social and information networks such as activity shaping~\cite{shaping14nips} and
opinion control~\cite{wang2016steering}.

{
\bibliographystyle{abbrv}
\bibliography{refs}
}

\newpage

\appendix
\subsection*{Proof of Proposition~1} \label{app:jsde-intensity} 
%
%
Using the left continuity of poisson processes, we have that:
\begin{align*}
	d\lambda^{*}(t)&=\lambda_0'(t)dt + \alpha \, d\left(\int_0^t g(t-s) dN(s) \right).
\end{align*}
Then, using Ito'{}s calculus~\cite{hanson2007}, we can rewrite the differential in the
second term as
\begin{align*}
	\int_0^{t+dt} g(t+dt-s)\,dN(s) - \int_0^t g(t-s)\,dN(s) &=\int_0^{t+dt} (g(t-s)+g'(t-s)dt) \, dN(s) - \int_0^t g(t-s)\,dN(s) \\
	&=\int_t^{t+dt} g(t-s)\,dN(s) + dt\int_0^{t+dt} g'(t-s)\,dN(s) \\
	&=g(0)dN(t) -w \, dt\int_0^{t+dt} g(t-s)\,dN(s)	 \\
	&=dN(t) - w \, dt \int_0^{t} g(t-s)\,dN(s)		 \\
	&=dN(t)+ \frac{w}{\alpha} [\lambda_0(t)-\lambda^{*}(t)]dt.
\end{align*}
This completes the proof.

\subsection*{Proof of Lemma~3} \label{app:bellman} 
\begin{align*}
	\min_{u(t,t_f]} \mathbb{E}_{(N,M)(t,t_f]} \left[ \phi(r(t_f)) + \int_t^{t_f} \ell(r(\tau),u(\tau)) \, d\tau \right ] &= \min_{u(t,t_f]} \mathbb{E}_{(N,M)(t,t_f]} \bigg[ \phi(r(t_f)) + \int_t^{t+dt} \ell(r(\tau),u(\tau)) \, d\tau \nonumber \\ & \hspace{42mm} + \int_{t+dt}^{t_f} \ell(r(\tau),u(\tau)) \, d\tau \bigg ] \nonumber \\
	&= \min_{u(t,t_f]} \mathbb{E}_{(N,M)(t,t+dt]}\bigg[\mathbb{E}_{(N,M)(t+dt,t_f]} \Big[ \phi(r(t_f)) + \ell(t,r,u) \, dt  \nonumber \\ & \hspace{42mm} + \int_{t+dt}^{t_f} \ell(r(\tau),u(\tau)) \, d\tau \Big ] \bigg] \\
	&= \min_{u(t,t+dt]} \min_{u(t+dt,t_f]} \mathbb{E}_{(N,M)(t,t+dt]}\bigg[\ell(r(t),\lambda(t),t) \, dt \nonumber \\ & \hspace{10mm}
	+\mathbb{E}_{(N,M)(t+dt,t_f]} \Big[ \phi(r(t_f)) + \int_{t+dt}^{t_f} \ell(r(\tau),u(\tau)) \, d\tau \Big ] \bigg]  \\
	&= \min_{u(t,t+dt]} \mathbb{E}_{(N,M)(t,t+dt]} \left[ J(\lambda(t+dt),r(t+dt),t+dt)\right ] \nonumber \\
	&\hspace{20mm} + \ell(r(t),u(t)) \, dt.
\end{align*}

\subsection*{Proof of Lemma~4} \label{app:opt-con-sol} 
According to the Stone-Weierstrass theorem, any continuous function in a closed interval can be approximated as closely as desired by a polynomial function \cite{stone1948}. So by assuming the continuity of cost function we consider general form
\begin{align*}
	J(r(t),\lambda(t),t) = \sum_{i=0}^n \sum_{j=0}^m f_{ij}(t)r^i(t)\lambda^j(t),
\end{align*}
where $m$ and $n$ are arbitrary large numbers. Indeed in each time $t$ we approximate a two variate function of $r(t)$ and $\lambda(t)$ by a polynomial where the coefficient are defined by the time varying functions $f_{ij}(t)$. If we substitute this function in to Eq. \ref{eq:bellman-pde} and simplifying the expression we would have
\begin{align*}
	0&=\sum_{i=1}^n f'_{i0} \, r^i(t) + f_{i0}\,(r+1)^i \lambda - f_{i0} \, r^i \lambda +\sum_{j=1}^m f'_{0j} \lambda^j + j(\lambda'_0+\beta\lambda_0-\beta\lambda)f_{0j} \lambda^{j-1} +f_{0j} (\lambda+\alpha)^j \lambda - f_{0j} \lambda^{j+1} \\
	&+\sum_{i=1}^n \sum_{j=1}^m j(\lambda'_0+\beta\lambda_0-\beta\lambda)f_{ij} r^i \lambda^{j-1}+\sum_{i=1}^n \sum_{j=1}^m f_{ij} (r+1)^i (\lambda+\alpha)^j\lambda - f_{ij} r^i \lambda^{j+1} \\
	&-\frac{1}{2}s^{-1}\bigg[\sum_{i=1}^n f_{i0}r^i + \sum_{i=1}^n \sum_{j=1}^m  f_{ij} r^i \lambda^j \bigg]^2 +\frac{1}{2}q\,r^2 +f_{00}'
\end{align*}
where for notational simplicity we omitted the time argument of functions. To find the unknown functions $f_{ij}(t)$,  we equate the coefficient of different variables. If we consider the coefficient of $r^{2n}$, we have $f_{n0}(t)=0$. We can continue this argument for $n-1,n-2,\cdots,2$ to show that $\forall i\geq 2; \, f_{i0}(t)=0$. Similar reasoning for coefficients of $r^{2i}\lambda^{2j}$ shows that $\forall j, i\geq 2; f_{ij}(t)=0$. Finally, the coefficient of $r^2$ is $1/2q-1/2\,s^{-1}f^2_{10}(t)=0$ so $f_{10}(t)=(sq)^{1/2}$. If we rename $f_{0j}(t)$ to $g_j(t)$ and $f_{00}(t)$ to $f(t)$, then we have
\begin{align*}
	J(r(t),\lambda(t),t) = f(t)+ (sq)^{1/2} r(t) + \sum_{j=1}^m g_{j}(t)\lambda^j(t).
\end{align*}
We can continue the previous method to find the remaining coefficients and completely define the cost-to-go function. If we equate the coefficient of $\lambda^j$ to zero we would have a system of first oder differential equation which its $j$'th row is
\begin{align*}
	g_j'(t) + j(\alpha- \beta) g_j(t) + (j+1)\big(\lambda_0'(t)+\beta\lambda_0(t)+\frac{j}{2}\alpha^2\big)g_{j+1}(t) + \sum_{k=2}^{m-j} \binom{j+k}{k+1} \alpha^{k+1} g_{j+k}(t) = 0
\end{align*}
When $\lambda_0(t)=\lambda_0$, we can express this using matrix differential equation $\bm{g}'(t) = A\bm{g}(t)$.
and its solution is $\bm{g}(t) = c_1 e^{\zeta_1 t} \bm{u}_1 + c_2 e^{\zeta_2 t} \bm{u}_2 + \cdots + c_n e^{\zeta_n t} \bm{u}_n	$
where $\zeta_i$ and $\bm{u}_i$ are eigenvalue and eigenvector of matrix $A$ and $c_i$ is a constant found using the terminal conditions. Since in triangular matrices diagonal entries are eigenvalues, we have
	$ \bm{g}(t) = \sum_{j=1}^m c_i e^{j(\beta-\alpha)} \bm{u}_i $.
We can approximate general time varying $\lambda_0(t)$ using piecewise function and repeat the above procedure for each piece.
\subsection*{Lemma~6}\label{app:differentialSDE} 
%
\begin{lemma}\label{thm:differentialSDE}
	Let $x(t)$ and $y(t)$ be two jump-diffusion processes defined by following jump  SDEs:
	\begin{align}
		dx(t) &= f(x(t),t) dt + h(x(t),t) dN(t) + g(x(t),t) dM(t) \nonumber \\
		dy(t) &= m(y(t),t) dt + n(y(t),t) dM(t), \nonumber
	\end{align}
where $N(t)$, $M(t)$ are independent jump processes.
%
%
If function $F(x,y,t)$ is once continuously differentiable in $x$, $y$ and $t$, then,
\begin{align}
	dF(x(t),y(t),t) &= (F_t + f F_x + m F_y)(x(t),y(t),t) dt +\left[F\big(x(t)+h(x(t),t),\,y(t),\,t\big)-F(x(t),y(t),t)\right] dN(t) \nonumber \\
	& +\left[F\big(x(t)+g(x(t),t),\,y(t)+n(y(t),t),\,t\big)-F(x(t),y(t),t)\right] dM(t). \nonumber
\end{align}
\end{lemma}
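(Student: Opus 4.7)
The plan is to derive the differential $dF$ by conditioning on the mutually exclusive events that can occur on an infinitesimal interval $(t,t+dt]$ and Taylor-expanding in the continuous regime. Because $N$ and $M$ are independent counting processes, by the Blumenthal zero-one law only three outcomes have nonzero probability to first order: (a) no jump in either process, (b) exactly one jump of $N$, (c) exactly one jump of $M$. I would introduce the indicator decomposition
\begin{align*}
dF(x(t),y(t),t) &= \bigl[F(x(t{+}dt),y(t{+}dt),t{+}dt)-F(x(t),y(t),t)\bigr]\bigl(1-dN(t)\bigr)\bigl(1-dM(t)\bigr) \\
&\quad + \bigl[F(x(t{+}dt),y(t{+}dt),t{+}dt)-F(x(t),y(t),t)\bigr]\,dN(t) \\
&\quad + \bigl[F(x(t{+}dt),y(t{+}dt),t{+}dt)-F(x(t),y(t),t)\bigr]\,dM(t),
\end{align*}
using $dN(t)\,dM(t)=0$ almost surely.

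In the no-jump branch, $dx(t)=f(x(t),t)\,dt$ and $dy(t)=m(y(t),t)\,dt$, so a first-order Taylor expansion of $F$ in its three arguments gives the continuous contribution $\bigl(F_t + f F_x + m F_y\bigr)(x(t),y(t),t)\,dt$, with the higher-order terms being $o(dt)$ because in this branch the trajectories are smooth. In the $dN(t)=1$ branch, the process $x$ jumps by $h(x(t),t)$ while $y$ experiences no jump, so the increment of $F$ is exactly $F(x(t)+h(x(t),t),y(t),t)-F(x(t),y(t),t)$, and the continuous $dt$ correction is absorbed into lower-order terms relative to the jump. Analogously, in the $dM(t)=1$ branch, $x$ jumps by $g(x(t),t)$ and $y$ jumps by $n(y(t),t)$ simultaneously, yielding the third bracketed term in the claimed formula.

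Collecting the three branches and using the fact that the continuous contribution absorbs the factor $(1-dN)(1-dM)=1+o(1)$ (since the jump increments are $O(1)$ only on a set of infinitesimal probability), I obtain the stated expression. The main subtlety, and the step that requires the most care, is justifying that the higher-order terms in the Taylor expansion do not contribute once we pass to the stochastic differential; this relies on the left-continuity of the counting processes together with the fact that $\mathbb{E}[dN(t)]$ and $\mathbb{E}[dM(t)]$ are themselves $O(dt)$, so cross-terms such as $dN(t)\,dt$ are $o(dt)$. The rest of the argument is bookkeeping: combine the coefficients of $dt$, $dN(t)$, and $dM(t)$ to obtain the form displayed in the lemma.
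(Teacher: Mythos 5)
Your proposal is correct and follows essentially the same route as the paper's proof: both decompose the increment of $F$ over $(t,t+dt]$ into the no-jump, $dN$-jump, and $dM$-jump branches via the zero-one law (so $dN(t)\,dM(t)=0$), Taylor-expand $F$ to first order in the continuous branch, and discard the cross-terms $dt\,dN(t)$ and $dt\,dM(t)$ as higher order before collecting coefficients of $dt$, $dN(t)$, and $dM(t)$. The only cosmetic difference is that you multiply the full increment by indicator factors up front, whereas the paper substitutes the branch-specific values of $x(t+dt)$ and $y(t+dt)$ directly into $F$ before expanding; the resulting computation is identical.
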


\begin{proof}
According to the definition of differential,
\begin{align*}
	dF \coloneqq dF(x(t),y(t),t) &= F(x(t+dt),y(t+dt),t+dt) - F(x(t),y(t),t) \\ %
	&=F\big(x(t)+dx(t),y(t)+dy(t),t+dt\big) - F\big(x(t),y(t),t\big)
\end{align*}
where we used the complete notation for $F$ to be more clear.  Using the zero-one law of point processes, we can write
\begin{align*}
	dF &= F\big(x+fdt+h,\,y+mdt,\,t+dt\big)\,dN(t) + F\big(x+fdt+g,\,y+mdt+n,\,t+dt\big)\,dM(t) \\
	&+ F\big(x+f dt,\,y+m dt,\,t+dt\big)\,(1-dN(t))(1-dM(t)) - F\big(x,y,t\big)
\end{align*}
where for notational simplicity we drop arguments of all functions except $F$.
Then, we can expand the first three terms in the right hand sides:
\begin{align*}
	F\big(x+fdt+h,\,y+mdt,\,t+dt\big) &=F(x+h,y,t)+F_x(x+h,y,t)fdt+F_y(x+h,y,t)mdt +F_t(x+h,y,t)dt \\
	&+F\big(x+fdt+g,\,y+mdt+n,\,t+dt\big) \\
	& = F(x+g,y+n,t) + F_x(x+g,y+n,t)fdt + F_y(x+g,y+n,t)mdt \\
	&+F_t(x+g,y+n,t)dt + F\big(x+fdt,\,y+mdt,\,t+dt\big) \\
	& = F(x,y,t) +F_x(x,y,t)fdt + F_y(x,y,t)mdt+F_t(x,y,t)dt,
\end{align*}
using that the bilinear differential form $dt \, dN(t)=0$~\cite{hanson2007} and $dN(t)dM(t)=0$ by the zero-one jump law~\cite{Kingman1992}. Finally
\begin{align*}
	dF = (fF_x+mF_y+F_t)(x,y,t)dt + \big[ F(x+h,y,t)-F(x,y,t)) \big]dN(t)
	+\big[ F(x+g,y+n,t) - F(x,y,t) \big]dM(t).
\end{align*}
\end{proof}

\newpage

\end{document}